\definecolor{blued}{RGB}{70,197,221}
\definecolor{pearYe}{HTML}{FFB733}
\definecolor{pearOne}{HTML}{2C3E50}
\definecolor{pearTwo}{HTML}{A9CF54}
\definecolor{pearTwoT}{HTML}{C2895B}
\definecolor{pearThree}{HTML}{E74C3C}
\colorlet{titleTh}{pearOne}
\colorlet{bull}{pearTwo}
\definecolor{pearcomp}{HTML}{B97E29}
\definecolor{pearFour}{HTML}{588F27}
\definecolor{pearFith}{HTML}{ECF0F1}
\definecolor{pearDark}{HTML}{2980B9}
\definecolor{pearDarker}{HTML}{1D2DEC}
\definecolor{graphicbackground}{rgb}{0.96,0.96,0.8}
\definecolor{rouge1}{RGB}{226,0,38}  % red P
\definecolor{orange1}{RGB}{243,154,38}  % orange P
\definecolor{jaune}{RGB}{254,205,27}  % jaune P
\definecolor{blanc}{RGB}{255,255,255} % blanc P
\definecolor{rouge2}{RGB}{230,68,57}  % red S
\definecolor{orange2}{RGB}{236,117,40}  % orange S
\definecolor{taupe}{RGB}{134,113,127} % taupe S
\definecolor{gris}{RGB}{91,94,111} % gris S
\definecolor{bleu1}{RGB}{38,109,131} % bleu S
\definecolor{bleu2}{RGB}{28,50,114} % bleu S
\definecolor{vert1}{RGB}{133,146,66} % vert S
\definecolor{vert3}{RGB}{20,200,66} % vert S
\definecolor{vert2}{RGB}{157,193,7} % vert S
\definecolor{darkyellow}{RGB}{233,165,0}  % orange S
\definecolor{lightgray}{rgb}{0.9,0.9,0.9}
\definecolor{darkgray}{rgb}{0.6,0.6,0.6}
\definecolor{babyblue}{rgb}{0.54, 0.81, 0.94}
\definecolor{citrine}{rgb}{0.89, 0.82, 0.04}
\definecolor{misogreen}{rgb}{0.25,0.6,0.0}
\newcommand*\diff{\mathop{}\!\mathrm{d}}
\newcommand{\pa}[1]{\left(#1\right)}
\newcommand{\transpose}{^\mathsf{\scriptscriptstyle T}}
\newcommand{\eps}{\varepsilon}
\renewcommand{\epsilon}{\varepsilon}
\renewcommand{\hat}{\widehat}
\renewcommand{\tilde}{\widetilde}
\newcommand{\bSigma}{\mathbf{\Sigma}}
\newcommand{\bGamma}{\mathbf{\Gamma}}
\newcommand{\nothere}[1]{}
\title{Compressing the Input for CNNs with the First-Order Scattering Transform} 
\titlerunning{Compressing the Input for CNNs
with the First-Order Scattering Transform}
\authorrunning{E.\,Oyallon, E.\,Belilovsky, S.\,Zagoruyko, M.\,Valko}
\author{Edouard Oyallon,$^{\!\!1,4,5}$ Eugene Belilovsky,$\!^2$ Sergey Zagoruyko,$\!^3$ Michal Valko$^4$}
\institute{$^1$CentraleSupelec, Universit\'e Paris-Saclay\\
$^2$MILA, University of Montreal\\
$^3$WILLOW -- Inria Paris,
$^4$SequeL -- Inria Lille,
$^5$GALEN -- Inria Saclay
\\
}
\begin{document}

\maketitle

%Please write out author names in full in the paper, i.e. full given and family names. 
%If any authors have names that can be parsed into FirstName LastName in multiple ways, please include the correct parsing, in a comment to the volume editors:
%\index{Lastnames, Firstnames}
%(Do not uncomment it, because you may introduce extra index items if you do that, we will use scripts for introducing index entries...)

%\institute{Department,\\
	%University\\
	%\email{ \{author1,author2\}@univ.edu}
%}

\begin{abstract}
%Early layers of Convolutional Neural Networks can often be the most burdensome in computational and memory costs. At the same time, the first layers of these networks often deal with well-understood signal regularities, suggesting we can improve existing systems by avoiding learning. 

%The scattering transform of order two has recently been studied in the c. 
We study the \emph{first-order} scattering transform as a candidate for reducing the signal processed by a \emph{convolutional neural network} (CNN). We show theoretical and empirical evidence that in the case of natural images and sufficiently small translation invariance, this transform preserves most of the signal information needed for classification while \emph{substantially reducing} the spatial resolution and total signal size.  We demonstrate that cascading a CNN with this representation performs on par with ImageNet classification models, commonly used in downstream tasks, such as the ResNet-50. We subsequently apply our trained hybrid ImageNet model as a base model on a detection system, which has typically larger image inputs. On Pascal VOC and COCO detection tasks we demonstrate  improvements in the inference speed and training memory consumption compared to models trained directly on the input image. 
\keywords{CNN, SIFT,  image descriptors, first-order scattering}

\end{abstract}
% last updated in April 2002 by Antje Endemann
% Based on CVPR 07 and LNCS, with modifications by DAF, AZ and elle, 2008 and AA, 2010, and CC, 2011; TT, 2014; AAS, 2016

\section{Introduction}
Convolutional neural networks (CNNs) for supervised vision tasks  learn often from raw images \cite{lecun2010convolutional} that could be arbitrarily large.
Effective reduction of the spatial dimension and total signal size for CNN processing is difficult. One way is to learn this dimensionality reduction during the training of a supervised CNN. Indeed, the very first layers of standard CNNs play often this role and reduce the spatial resolution of an image via \emph{pooling} or \emph{stride operators}. Yet, they generally maintain the input layer sizes and even increase it by expanding the number of channels. These pooling functions can correspond to a linear pooling such as wavelet pooling~\cite{williams2018wavelet}, a spectral pooling~\cite{rippel2015spectral}, an average pooling, or a non-linear pooling such as $\ell^2$-pooling \cite{le2013building}, or max-pooling. For example, the two first layers of an AlexNet \cite{krizhevsky2012imagenet}, a  VGG ~\cite{simonyan2014very} or a ResNet \cite{he2015deep} reduce the resolution respectively by $2^3$, $2^1$, and $2^2$, while the dimensionality of the layer  is increased by a factor $1.2$, $5.3$, and $1.3$ respectively. This spatial size reduction is important for computational reasons because the complexity of convolutions is quadratic in spatial size while being linear in the number of channels. This suggests that reducing the input size to subsequent CNN layers calls for a careful design. In this work, we (a) analyze a generic method that, \emph{without learning}, reduces  \emph{input size} as well as \emph{resolution} and (b) show that it \emph{retains enough information} and structure that permits applying a CNN to obtain competitive performance on classification and detection.

Natural images have a lot of redundancy, that can be exploited by finding a frame to obtain a sparse representation \cite{forsyth2002computer,mallat1999wavelet}. For example, a wavelet transform of piece-wise smooth signals (e.g., natural images) leads to a multi-scale and sparse representation \cite{mallat1992singularity}. This fact can be used for a compression algorithm~\cite{skodras2001jpeg}. Since in this case the most of the information corresponds to just a few wavelet coefficients, a transform coding can be applied to select them and finally quantize the signal, which is consequently a more compact representation. Yet, this leads to variable signal size and thus this method is not amenable for CNNs that require a constant-size input. Another approach is to select a subset of these coefficients, which would be a linear projection. Yet, a linear projection would imply an unavoidable loss of significant discriminative information which is not desirable for vision applications. Thus, we propose to use a \emph{non-linear} operator to reduce the signal size and we justify such construction.

%a learned CNN reduce the spatial resolution of an image via pooling operators or strides.

%Several pooling techniques have been proposed to reduce a signal size, and they do not involve learning. Non-linear pooling such as $\ell^2$-pooling \cite{} or max-pooling \cite{}.

%\mynote{discuss any hybrid nets e.g. perronin, etc, highlighting non reduce the signal, then mention ICCV and say it reduce spatial as an indirect consequence but not signal}
Prior work has proposed to input predefined features into CNNs or neural networks. For example, \cite{perronnin2015fisher} proposed to apply a deep neural network on Fisher vectors. This approach relies on the extraction of overlapping descriptors, such as  SIFT, at irregular spatial locations and thus does not permit a fixed size output. Moreover, the features used in these models increase the signal size. In~\cite{fujieda2017wavelet}, wavelets representations are combined at different layer stages, similarly to DenseNet~\cite{huang2017densely}.  \cite{levinskis2013convolutional} proposes to apply a 2D Haar transform that leads to subsampled representation by a factor of $2^1$ but is limited to this resolution. Concurrent to our work, \cite{gueguen2018faster} proposed to train CNNs on top of raw DCT to improve inference speed by reducing the spatial resolution, yet this transformation is orthogonal and thus preserves the input size. Moreover, \cite{oyallon2017scaling} proposes to input \emph{second-order}  scattering coefficients to a CNN, that are named \emph{hybrid scattering networks},  which lead to a competitive performance on datasets such as ImageNet. The scattering transform is a \emph{non-linear} operator based on a cascade of wavelet transforms and modulus non-linearity which are spatially averaged. This leads to a reduction in the spatial resolution of the signal. However, although the second-order scattering representation is more discriminative, it produces a larger signal than the original input size.

In this work, we also input predefined features into CNNs, \emph{but} with the explicit goal of an initial stage producing a compressed representation that is still amenable to processing by a CNN.  
In particular, we show that the first-order scattering representation is a natural candidate for several vision tasks. This descriptor leads to high accuracy on large-scale classification and detection while it can be computed much faster than its second-order counterpart because it requires fewer convolutions. As explained in \cite{bruna2013invariant}, this descriptor is similar to SIFT and DAISY descriptors that have been used as feature extractors in many classical image classification and detection systems \cite{sanchez2013image,morel2009asift}. In this paper, we show that in the case of hybrid networks \cite{oyallon2017scaling,perronnin2015fisher}, using the \emph{first-order scattering only} can have favorable properties with respect to the second-order ones and possibly higher-order ones.

The core of our paper it the analysis and justification of the combination of first-order scattering and CNNs. We support it both with theoretical and numerical arguments. %The paper is structured as follows.  
In Section~\ref{fos}, we justify that first-order scattering with small-scale invariance reduces the spatial resolution and signal while preserving important attributes. First, we motivate the first-order scattering from a dimensionality reduction view in Section~\ref{d}. Then, in Section \ref{il}, we illustrate the negligible loss of information via a good reconstruction of synthetic signals and natural images using only a first-order scattering. Next, in Section~\ref{numeric} we present our experiments\footnote{code available at \texttt{\url{https://github.com/edouardoyallon/pyscatlight}}} on challenging datasets.  We demonstrate competitive performance with ImageNet models commonly used in transfer learning in Section~\ref{classif}. In Section~\ref{detection} we show on COCO and Pascal VOC detection tasks that these base networks can lead to improvements in terms of inference speed and memory consumption versus accuracy.

\section{First-order scattering}
\label{fos}
In this section, we motivate the construction of a first-order \emph{scattering transform} from a compression perspective. Indeed, a scattering transform is traditionally built as a representation that preserves high-frequency information, while building stable invariants w.r.t.\,translations and deformations. While using the same tools, we adopt a rather different take. We show theoretically and numerically that a first-order \emph{scattering transform}  builds limited invariance to translation, reduces the size of an input signal, preserves most of the information needed to discriminate and reconstruct a  natural image. Note also that this representation is able to discriminate \emph{spatial} and \emph{frequency variations} of natural images. In this section, we deal with  \emph{G\'abor wavelets}~\cite{olshausen1996emergence}  since their analysis is simpler, while for the experiments we will use modified G\' abor wavelets, namely \emph{Morlet wavelets}~\cite{oyallon2017scaling} for the sake of comparison. We show that the first-order scattering transform does not lose significant signal characteristics of natural images, by providing reconstruction examples obtained via a mean-square error minimization. In particular, we demonstrate this property on \emph{Gaussian blobs} as a simplified proxy for natural images. 
\subsection{A reduction of the spatial resolution}
\label{d}
\subsubsection{Definition}A scattering first-order transform~\cite{mallat2012group} is defined from a \emph{mother wavelet}~$\psi$ and a \emph{low-pass filter} $\phi$. An input signal $x$ is filtered by a collection of dilated band-pass wavelets obtained from~$\psi$, followed by a \emph{modulus} and finally averaged by a \emph{dilation} of~$\phi$. The wavelets we chose decompose the signal in a basis in which transient structure of a signal is represented more compactly. 
We describe the construction of each filter and justify the necessity of each operator. First, let us fix an integer $J$ that specifies the window length of the low-pass filter. For the sake of simplicity, we consider G\'abor filters~\cite{olshausen1996emergence}.  These filters provide a good localization tradeoff between \emph{frequency} and \emph{space planes}, due to Heisenberg uncertainty principle \cite{mallat1999wavelet}. Thus, having
\begin{equation*}
\kappa(\omega)\triangleq e^{-2\sigma_0^2\Vert\omega\Vert^2}
\end{equation*}
for a fixed bandwidth $\sigma_0$ and a slant $s$ that discriminates angles, we  set for $\omega=(\omega_1,\omega_2)$,
\begin{equation*}
\hat \psi(\omega)\triangleq\kappa\pa{\pa{\omega_1,\frac{\omega_2}{s}}-\omega_0}\quad\text{and}\quad\hat \phi(\omega)\triangleq\kappa(\omega).
\end{equation*}
The frequency plane (and in particular the image frequency circle of radius $\pi$) needs to be covered by the support of the filters to avoid an information loss.  This issue is solved by the action of the Euclidean group on $\psi$ via rotation $r_{-\theta}$ and dilation by $j\leq J$,
\begin{equation*}
\psi_{j,\theta}(u)=\frac{1}{2^{2j}}\psi\pa{r_{-\theta}\frac{u}{2^j}}\quad\text{and} \quad\phi_J(u)=\frac{1}{2^{2J}}\phi\pa{\frac{u}{2^J}}\!\cdot
\end{equation*}
In this case, each wavelet $\psi_{j,\theta}$ has a bandwidth of $1/(2^j\sigma_0)$ and its central frequency is $2^jr_{-\theta}\omega_0$. If a filter has a compact support in the frequency domain, then due to Nyquist principle, we can reduce the spatial sampling of the resulting convolution. We do this approximation in the case of G\'abor filters. As we shall see, this localization in frequency is also fundamental because it permits to obtain a \emph{smooth envelope}. The parameters $j\leq J$ and $\theta\in \Theta$ are discretized and $\sigma_0$ is adjusted such that a wavelet transform preserves all the energy of $\hat x$, characterized by
\begin{equation*}
\exists \epsilon_0\geq0,\forall \omega, \Vert\omega\Vert < \pi:\  1-\epsilon_0 \leq \!\!\!\sum_{j\leq J,\theta\in \Theta}\big|\hat \psi_{j,\theta}(\omega)\big|^2+\big|\hat \phi_J(\omega)\big|^2\leq 1+\epsilon_0.
\end{equation*}
\noindent
As a result, the transform is bi-Lipschitz and the magnitude of $\epsilon_0$ determines the conditioning of the wavelet transform. An ideal setting is $\epsilon_0=0$, for which the transform is an isometry which gives a one-to-one mapping of the signal while preserving its $\ell^2$-norm. Applying a convolution with these wavelets followed by a modulus removes the phase of a signal and thus should lead to a loss of information.

In fact, \cite{mallat2015phase} proves that it is possible to reconstruct a signal from the modulus of its wavelet transform up to a global translation with \emph{Cauchy wavelets}. Furthermore, there exists an algorithm of reconstruction \cite{waldspurger2015phase}, with stability guarantees and extension to other class of wavelets. Consequently, the modulus of a wavelet transform does not lead to a significant loss if applied appropriately. Additionally, \cite{mallat2012group} demonstrates that this representation is stable to deformations, which permits building invariants to deformations, convenient in many vision applications. We now explain how the dimensionality reduction occurs.

\noindent
The scattering first-order transform $S$~\cite{mallat2012group} parametrized by $J$ is\footnote{in the following, we omit the dependence w.r.t.\,the scale $J$} defined as
\begin{equation*}
Sx(u)=\left\{\left|x\star \psi_{j,\theta}\right|\star \phi_J(2^Ju),x\star \phi_J(2^Ju)\right\}_{\theta\in\Theta, j\leq J}.
\end{equation*}
The low-pass filter $\phi_J$ builds a transformation that is locally invariant to translation up to $2^J$. Therefore, it reduces the spatial sampling of the signal by a factor of $2^J$. This also means that when discretized image of length $N$ represented by $N^2$ coefficients, is filtered by the low-pass filter $\phi_J$, the signal is represented by $N^2/2^{2J}$ coefficients. Consequently, the number of coefficients used to represent $Sx$ is
\begin{equation*}
\pa{1+|\Theta| J}\frac{N^2}{2^{2J}}\cdot
\end{equation*}

 In our case, we use $|\Theta|=8$, because it permits obtaining a good covering of the frequency plane, and thus, the input signal $x$ is compressed via $Sx$ if $J\geq 3$. The low-pass filtering implies a necessary loss of information because it discards some high-frequency structure and only retains low frequencies which are more invariant to translation. It is fundamental to evaluate the quality of this compressed representation in order to validate that enough information is available for supervised classifier such as CNNs, which is what we do next.

\subsubsection{Preserving signal information via modulus}We evaluate the loss of information due to the low-pass filtering, which captures signal attributes located in the low-frequency domain. Notice that there would be no loss of information if the Fourier transform of the wavelet-modulus representation was located in a compact domain included in the bandwidth of $\phi_J$. Unfortunately, this property is not guaranteed in practice.

Nonetheless, G\'abor wavelets are \emph{approximately analytic} which implies that when convolved with a signal $x$, the resulting envelope is smoother  \cite{krajsek2007unified,mallat1999wavelet,soulard2012ondelettes,mallat2012group,delprat1992asymptotic}. A smooth envelope of the signal implies that a significant part of its energy  can
be captured and preserved by a low-pass filter~\cite{mallat2012group}. Furthermore, under limited assumptions of point-wise regularity on $x$, if the signal does not vanish, it is possible to quantify this smoothness, as done in \cite{delprat1992asymptotic}.   Informally, for a translation~$x_a(u)\triangleq x(u-a)$ by $a$ of $x$, it means that if $\Vert a\Vert \ll1$, then we imply that
\[|x_a\star \psi|(u)\approx |x\star\psi|(u).\]
Here, we simply give some explicit constant w.r.t.\,the stability to translation, that we relate to the envelope of $\psi$. Indeed, the G\'abor filter $\psi$ concentrates its energy around a central frequency~$\omega_0$,
\[\exists\eta_{0}>0,\exists\omega_{0},\epsilon\geq0,  \forall \omega,\quad \Vert\omega-\omega_{0}\Vert_2 >\eta_0 \implies |\hat\psi(\omega)|\leq\epsilon.\] 

\noindent

First-order  scattering  incorporates more information if the modulus operator has smoothed the signal. To this end, we characterize the stability w.r.t.\,translations in the case of G\'abor wavelets. In particular, we provide the following Lipschitz bound w.r.t.\,translations.
\begin{proposition}\label{prop1}
For any signal $x\in \ell^2\!\!,$
\[\Vert x_a \star \psi-e^{-i\omega_0\transpose a}x\star\psi\Vert\leq \Vert x \Vert\pa{\Vert\eta_0\Vert\Vert a\Vert+\tilde \epsilon\pa{\lVert a \rVert}}\!,\]
where $\tilde \epsilon$ is a term of the order of $\eps$.\end{proposition}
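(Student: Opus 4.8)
The plan is to work entirely on the Fourier side. By Plancherel (in the discrete setting $\hat x$ is the DTFT, supported on $[-\pi,\pi]^2$; the same argument works in $L^2(\R^2)$ once one uses that images are essentially bandlimited to the frequency disc of radius $\pi$), the quantity to bound equals $\Vert \widehat{x_a\star\psi}-e^{-i\omega_0\transpose a}\widehat{x\star\psi}\Vert$ up to the usual normalization. Since $\widehat{x_a}(\omega)=e^{-i\omega\transpose a}\hat x(\omega)$ and convolution becomes multiplication, the two Fourier transforms are $e^{-i\omega\transpose a}\hat x(\omega)\hat\psi(\omega)$ and $e^{-i\omega_0\transpose a}\hat x(\omega)\hat\psi(\omega)$, so their difference is $\hat x(\omega)\hat\psi(\omega)\big(e^{-i\omega\transpose a}-e^{-i\omega_0\transpose a}\big)$. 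Hence
\[\Vert x_a\star\psi-e^{-i\omega_0\transpose a}x\star\psi\Vert^2=\frac{1}{(2\pi)^2}\int_{[-\pi,\pi]^2}|\hat x(\omega)|^2\,|\hat\psi(\omega)|^2\,\big|e^{-i\omega\transpose a}-e^{-i\omega_0\transpose a}\big|^2\,d\omega.\]
Everything now reduces to estimating the phase factor $\big|e^{-i\omega\transpose a}-e^{-i\omega_0\transpose a}\big|=\big|e^{-i(\omega-\omega_0)\transpose a}-1\big|$, for which the elementary inequality $|e^{it}-1|\le|t|$ together with Cauchy--Schwarz gives $\big|e^{-i(\omega-\omega_0)\transpose a}-1\big|\le\Vert\omega-\omega_0\Vert\,\Vert a\Vert$, while trivially it is also bounded by $2$.

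Next I would split the integration domain at $\Vert\omega-\omega_0\Vert_2=\eta_0$, the threshold in the concentration hypothesis on $\hat\psi$. On the near region $\{\Vert\omega-\omega_0\Vert_2\le\eta_0\}$ the phase factor is at most $\eta_0\Vert a\Vert$, so this part of the integral is bounded by $\eta_0^2\Vert a\Vert^2\cdot\frac{1}{(2\pi)^2}\int|\hat x|^2|\hat\psi|^2\le\eta_0^2\Vert a\Vert^2\Vert x\Vert^2$, where I used Plancherel and the fact that the admissibility/frame condition stated above forces $\Vert\hat\psi\Vert_\infty\le\sqrt{1+\epsilon_0}\approx1$. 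On the far region $\{\Vert\omega-\omega_0\Vert_2>\eta_0\}$ the hypothesis gives $|\hat\psi(\omega)|\le\eps$; combining this with the crude bound $2$ on the phase factor contributes at most $4\eps^2\Vert x\Vert^2$, and if one prefers a bound that vanishes with $a$, one instead uses $\Vert\omega-\omega_0\Vert\le\Vert\omega_0\Vert+\pi\sqrt2$ on $[-\pi,\pi]^2$ to get $\eps^2(\Vert\omega_0\Vert+\pi\sqrt2)^2\Vert a\Vert^2\Vert x\Vert^2$.

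Finally I would add the two contributions and apply $\sqrt{A+B}\le\sqrt A+\sqrt B$, yielding $\Vert x_a\star\psi-e^{-i\omega_0\transpose a}x\star\psi\Vert\le\Vert x\Vert\big(\eta_0\Vert a\Vert+\tilde\eps(\Vert a\Vert)\big)$ with $\tilde\eps(\Vert a\Vert)$ equal to $2\eps$ in the crude version or $\eps(\Vert\omega_0\Vert+\pi\sqrt2)\Vert a\Vert$ in the refined one, in either case a term of order $\eps$ as claimed (and matching the $\Vert\eta_0\Vert\Vert a\Vert$ term, since $\eta_0$ is a scalar). The only genuinely delicate point is the far region: one must exploit the tail decay of $\hat\psi$ encoded in the concentration assumption to prevent the uncontrolled growth of $\Vert\omega-\omega_0\Vert$ — equivalently, the failure of the phase factor to be smooth at high frequencies — from spoiling the estimate; the rest is Plancherel plus the two elementary bounds on $|e^{it}-1|$. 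A minor bookkeeping issue is fixing the normalization of $\psi$ so that $\Vert\hat\psi\Vert_\infty\le1$, which is exactly what the frame condition provides.
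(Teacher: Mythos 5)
Your proposal is correct and follows essentially the same route as the paper: Parseval/Plancherel, the splitting of the frequency domain at $\Vert\omega-\omega_0\Vert=\eta_0$, the crude bound $2$ combined with $|\hat\psi|\leq\epsilon$ on the far region, and the $1$-Lipschitz phase estimate with Cauchy--Schwarz on the near region, followed by $\sqrt{A+B}\leq\sqrt{A}+\sqrt{B}$. Your explicit remarks on the normalization $\Vert\hat\psi\Vert_\infty\lesssim 1$ and the optional refinement making the far-region term vanish with $\Vert a\Vert$ are minor additions the paper leaves implicit, not a different argument.
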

\begin{proof}
Observe that
\begin{align*}
\lVert x_a \star \psi-e^{-i\omega_0\transpose a}&x\star\psi\rVert^2
\!=\!\int\!\Big|\!\pa{e^{-i\omega\transpose a}-e^{-i\omega_{0}\transpose a}}\hat{\psi}(\omega)\hat x(\omega)\Big|^{2}\!\!\!\diff\omega\text{\footnotesize\ \emph{via Parseval identity}} \\
&\leq	4\epsilon^2\Vert x\Vert ^2+\int_{\Vert\omega-\omega_{0}\Vert<\eta_{0}}\big|(e^{-i\omega\transpose a}-e^{-i\omega_{0}\transpose a})\hat{\psi}(\omega)\big|^{2}\big|\hat x(\omega)\big|^2\diff\omega.\\
\intertext{(note that $x\mapsto e^{ix}$ is 1-Lipschitz, thus we apply the Cauchy-Schwartz inequality)}
&\leq	\Vert x\Vert ^2\pa{4\epsilon^2+\Vert a\Vert^{2}\eta_{0}^2}.
\end{align*}
 Taking the square root finishes the proof.
\end{proof}

We note that this inequality is near-optimal, for G\'abor wavelets, if $x(u)=\delta_0$ is a Dirac in 0, then $|e^{i\omega_0\transpose a}\psi(a)-\psi(0)|\sim \Vert x \Vert \Vert a \Vert \eta_0$. Observe that dilating the mother wavelet $\psi$ to $\psi_j$ is equivalent to dilating the bandwidth $\eta_0$ to $2^{-j}\eta_0$. Following the reasoning, low-frequency  G\' abor wavelets are more likely to be invariant to a translation. 

Proposition~\ref{prop1} characterizes the Lipschitz stability w.r.t.\,translations and  indicates that the more localized a G\'abor wavelet is, the more translation-stable is the resulting signal. This way, we justify G\'abor wavelets as a great candidate for a wavelet transform with a smooth modulus with limited assumptions on $x$. 

Note that using only small-bandwidth G\'abor wavelets instead of dilated ones should be avoided because it would lead to significantly more filters. Furthermore, \cite{mallat2012group} shows that those filters will be more unstable to deformations, such as dilation, which is not desirable for vision applications.  

Despite the stability to translation, there is no guarantee that the first-order scattering preserves the complete energy of the signal. The next section  characterizes this energy loss  via an image model based on Gaussian blobs and a reconstruction algorithm for natural images.

\begin{figure}[h]
\centering
\subfloat[\textbf{middle}: $\text{PSNR} \approx 26dB,$ right: $\text{PSNR} \approx 20dB$]{
\centering
\includegraphics[height=3.8cm]{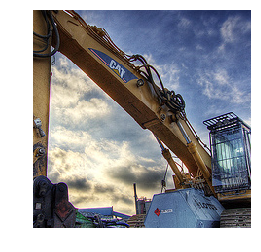}
\includegraphics[height=3.8cm]{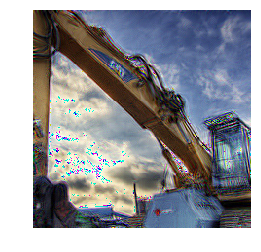}
\includegraphics[height=3.8cm]{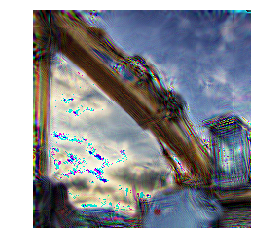}
}\\
\subfloat[\textbf{middle}: $\text{PSNR} \approx 23dB,$ right: $\text{PSNR} \approx 19dB$]{
\centering
\includegraphics[height=3.8cm]{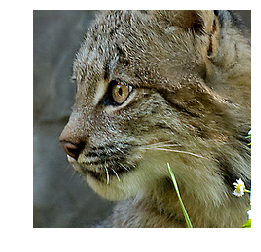}
\includegraphics[height=3.8cm]{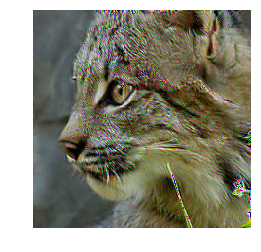}
\includegraphics[height=3.8cm]{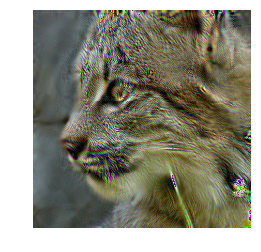}
}

\caption{Reconstructed images from first-order scattering coefficients, $J=3,4$ and their PSNR. Color channels can be slightly translated leading to artifacts. (\textbf{left}) original image $x$ (\textbf{middle}) reconstruction $\tilde x_3$ from $Sx$, $J=3$ (\textbf{right}) reconstruction $\tilde x_4$ from $Sx$, $J=4$. This demonstrates that even complex images can be reconstructed for $J=3$, while dividing the spatial resolution by $2^3$.}
\label{fig:reconstruction}
\end{figure}

\subsection{Information loss}
\label{il}
We now characterize the information loss for natural images in two ways. First, we perform an empirical reconstruction of an image from its first-order scattering coefficients, as done for the second order in \cite{bruna2013audio,bruna2013scattering} and observe that for natural images,  we can indeed obtain an effective reconstruction of the first-order scattering. This is a strong indication that the relevant signal information is preserved. Second, we consider a generic signal model and show that for relatively low-scale factors $J$, the reconstruction is practically achieved. 
\subsubsection{Reconstruction} Following \cite{bruna2013audio,bruna2013scattering}, we propose to reconstruct an input image~$x$ from its first-order scattering $Sx$ coefficient of scales $J$, via a $\ell^2$-norm minimization
 \begin{equation}
 \tilde x_J = \inf_y \Vert Sx-Sy\Vert. \label{min}
 \end{equation}
%\subsection{Simplification via blobs}
We use a gradient descent as all the operators are weakly differentiable and we analyze the reconstructed signal. Figure~\ref{fig:reconstruction} compares the reconstruction of a natural image with the first-order scattering for the scales $J=3$ and $J=4$. In our experiments, we optimize for this reconstruction with ADAM with an initial learning rate of 10 during $10^3$ iterations, reducing by 10 the learning rate every $2\times10^2$ iterations. We measure the reconstruction error of $\tilde x_J$ from an original image $x$ in terms of relative error, defined as \[\text{err}_J(x)=\frac{\Vert S\tilde x_J-Sx\Vert}{\Vert Sx\Vert}\cdot\]
In other words, we evaluate how close the scattering representation of an image is to its reconstruction. We stop the optimization procedure as soon as  we get $\text{err}_J(x)\sim 2\times10^{-3}$. In the case $J=3$, observe that the important and high-frequency structure of the signals, as well as their spatial localization, are preserved.  On the contrary, when $J=4$, the fine-scale structure is neither well reconstructed nor correctly located, which tends to indicate that $J\geq 4$ might not be a good-scale candidate for  $S$. We now characterize this loss more precisely on a model based on blobs.

\subsubsection{Gaussian blob model}Explicit computation of scattering coefficients for general signals is difficult because a modulus is a non-linear operator that usually leads to non-analytic expressions.  Therefore, we consider a simplified class of signals \cite{lindeberg1998feature} for which computations are exact and analytical. For a symmetric matrix $\bSigma$, we consider the unnormalized signal

\begin{equation*}
\hat{x}_\bSigma(\omega)\triangleq e^{-\omega\transpose\bSigma \omega}.
\end{equation*}
    Figure \ref{fig:reconstruction_blob} shows several signals belonging to  this class. Such signals correspond to blobs or lines as on Figure \ref{fig:reconstruction_blob}, which are frequent in natural images~\cite{lowe2004distinctive}.  We apply our reconstruction algorithm and we explain why the reconstructions is challenging.% and that $|x_{2^2\Sigma}-x_\Sigma|=x_{2^2\Sigma}-x_\Sigma$, thus computations will be again explicit.

\begin{figure}[h]
\centering
\vspace{2em}
\subfloat[ellipse]{%, $\bSigma=\rm diag(2,0.25)$]{%, $\text{err}_3\approx 2\,10^{-3},\text{err}_4\approx 2\,10^{-3}$]{
\centering
\includegraphics[height=2cm,trim={2cm 3cm 3cm 3.2cm},clip]{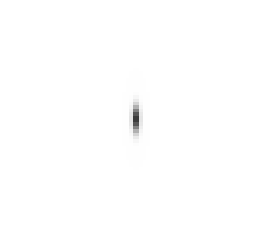}%,height=2cm
\includegraphics[height=2cm,trim={2cm 3cm 3cm 3.2cm},clip]{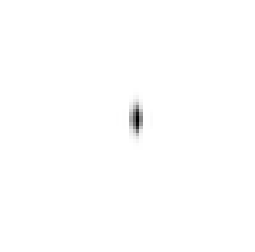}
\includegraphics[height=2cm,trim={2cm 3cm 3cm 3.2cm},clip]{ellipse_J_4}
}\\
\subfloat[small blob]{%, $\bSigma=\rm diag(0.25,0.25)$]{%, $\text{err}_3\approx 2\,10^{-3},\text{err}_4\approx 2\,10^{-3}$]{
\centering
\includegraphics[height=2.6cm,trim={3.4cm 3.4cm 3.4cm 3.4cm},clip]{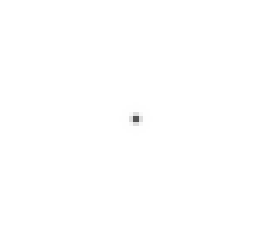}
\includegraphics[height=2.6cm,trim={3.4cm 3.4cm 3.4cm 3.4cm},clip]{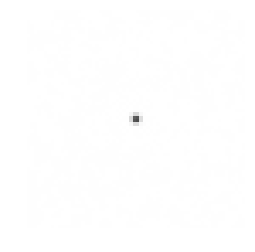}
\includegraphics[height=2.6cm,trim={3.4cm 3.4cm 3.4cm 3.4cm},clip]{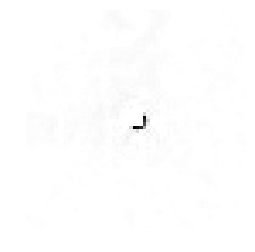}
}
\\
\subfloat[line]{%, $\bSigma=\rm diag(0.1,0)$ ]{%, $\text{err}_3\approx 2\,10^{-3},\text{err}_4\approx 2\,10^{-3}$.]{
\centering
\includegraphics[height=2.4cm,trim={3.4cm 3.4cm 3.4cm 3.4cm},clip]{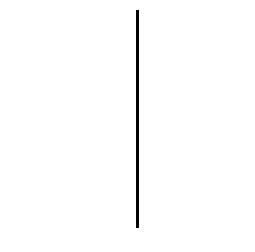}
\includegraphics[height=2.4cm,trim={3.4cm 3.4cm 3.4cm 3.4cm},clip]{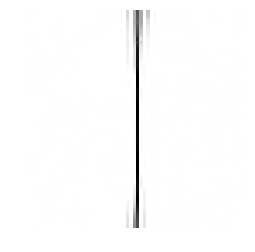}
\includegraphics[height=2.4cm,trim={3.4cm 3.4cm 3.4cm 3.4cm},clip]{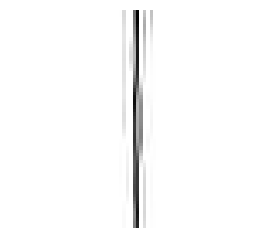}
}
%\vspace{2em}
\caption{Reconstruction of different signals of type $x_\bSigma$. (\textbf{left}) original image (\textbf{middle}) reconstruction via for $J=3$ (\textbf{right}) reconstruction for $J=4$}
\label{fig:reconstruction_blob}
\end{figure}
In particular, we prove the following proposition that is derived from convolutions between Gaussians and permits to compute their first-order scattering coefficients.
Intuitively, this proposition says that for a particular class of signals, we can get their exact reconstruction from their first-order scattering coefficients.
Note that for large values of $J$, the reconstruction is numerically infeasible.

\begin{proposition}For any symmetric $\bSigma$, $j$, and $\theta$, %$\propto$ meaning proportional to a normalization constant:
\[|x_{\bSigma}\star \psi_{j,\theta}|(u)\propto \big(x_\bSigma\star|\psi_{j,\theta}|\big)(u).\]
\end{proposition}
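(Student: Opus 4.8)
The plan is to use the fact that a G\'abor filter is a modulated Gaussian whose modulus is exactly its nonnegative Gaussian envelope, and that convolving a Gaussian signal with a Gaussian yields again a Gaussian (up to a phase factor). First I would record the structural form of the filter: from $\hat\psi(\omega)=\kappa\pa{\pa{\omega_1,\omega_2/s}-\omega_0}$ one reads off that $\hat\psi$ is a Gaussian in $\omega$, centered at some $\zeta$, with a positive-definite quadratic form; inverting, $\psi(u)=e^{i\zeta\transpose u}g(u)$ with $g=|\psi|\geq 0$ a real Gaussian. Since the dilation and rotation defining $\psi_{j,\theta}$ preserve this structure, we get $\psi_{j,\theta}(u)=e^{i\xi\transpose u}G(u)$, where $\xi=\xi_{j,\theta}$ is the central frequency and $G=|\psi_{j,\theta}|\geq 0$ is a real Gaussian with $\hat G(\omega)\propto e^{-\omega\transpose D\omega}$ for a positive-definite $D=D_{j,\theta}$ (I will not track multiplicative constants, since the statement is only up to proportionality).

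Next I would compute $x_\bSigma\star\psi_{j,\theta}$. Writing $x_\bSigma\star\psi_{j,\theta}(u)=e^{i\xi\transpose u}\big[(e^{-i\xi\transpose(\cdot)}x_\bSigma)\star G\big](u)$ and passing to the Fourier domain, the transform of $(e^{-i\xi\transpose(\cdot)}x_\bSigma)\star G$ is $\hat x_\bSigma(\omega+\xi)\,\hat G(\omega)\propto e^{-(\omega+\xi)\transpose\bSigma(\omega+\xi)}e^{-\omega\transpose D\omega}$. The exponent is a quadratic polynomial in $\omega$; completing the square rewrites it as $-(\omega-\mu)\transpose(\bSigma+D)(\omega-\mu)$ plus a constant, with $\mu=-(\bSigma+D)^{-1}\bSigma\xi$. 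Inverting, $(e^{-i\xi\transpose(\cdot)}x_\bSigma)\star G(u)=c_1 e^{i\mu\transpose u}H(u)$, where $H$ is the real nonnegative Gaussian with $\hat H(\omega)\propto e^{-\omega\transpose(\bSigma+D)\omega}$ and $c_1$ a constant. Multiplying back by $e^{i\xi\transpose u}$ and taking the modulus annihilates both exponential factors, so $|x_\bSigma\star\psi_{j,\theta}|(u)=|c_1|\,H(u)$.

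Finally I would compute the other side directly: $x_\bSigma\star|\psi_{j,\theta}|=x_\bSigma\star G$ has Fourier transform $\hat x_\bSigma(\omega)\hat G(\omega)\propto e^{-\omega\transpose(\bSigma+D)\omega}$, which is exactly (a constant multiple of) $\hat H$; hence $x_\bSigma\star|\psi_{j,\theta}|(u)=c_2 H(u)$. Comparing the two displays gives $|x_\bSigma\star\psi_{j,\theta}|(u)=(|c_1|/c_2)\,(x_\bSigma\star|\psi_{j,\theta}|)(u)$ with a constant independent of $u$, which is the claim.

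The step I would treat most carefully is the bookkeeping in completing the square: one must verify that the quadratic part of the exponent is precisely $\bSigma+D$, i.e.\ the same matrix that appears in $\hat x_\bSigma\hat G$, so that the Gaussian $H$ is literally shared by the two quantities and the extra linear term contributes only the phase $e^{i\mu\transpose u}$, which the modulus removes. One should also note that $\bSigma+D_{j,\theta}$ is positive definite (as $D_{j,\theta}$ is), so that all the Gaussians and Fourier inversions above are well defined.
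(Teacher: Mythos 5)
Your proposal is correct and follows essentially the same route as the paper's proof: both pass to the Fourier domain, observe that $\widehat{x_\bSigma\star\psi}$ is a Gaussian with quadratic form $\bSigma+\bGamma\transpose\bGamma$ (your $\bSigma+D$) recentered at a shifted frequency, and conclude that the inverse transform is that same real Gaussian up to a phase modulation which the modulus removes, matching $x_\bSigma\star|\psi|$. Your write-up is somewhat more explicit than the paper's (which leaves the final comparison of the two sides implicit after the completed square), but there is no substantive difference in the argument.
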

\begin{proof}
Without loss of generality, we prove the result for $\hat \psi(\omega)\triangleq e^{-\Vert \bGamma \omega-b\Vert^2}$, where $\bGamma$ is invertible and $b\in\mathbb{R}^2$. Then, $\hat{|\psi|}(\omega)\propto e^{-\Vert \bGamma \omega \Vert^2}$.  Let $\Delta(u) \triangleq x_\bSigma \star\psi(u)$.  Then by definition,
\begin{align*}
\hat \Delta(\omega) &\propto  e^{-\omega\transpose(\bSigma+\bGamma\transpose\bGamma)\omega+2\omega\transpose\bGamma b}. %e^{-2(\omega-\omega_0)^T\Gamma(\omega-\omega_0)-2\omega\transpose \Sigma\omega}\\ 
%&\propto  e^{-2\omega\transpose(\Sigma+\Gamma)\omega+4\omega\transpose\Gamma\omega_0}%-2\omega_0^T\Sigma\omega_0}
\end{align*}
As $\bGamma\transpose\bGamma\succ0$, we can set $\tilde b \triangleq \big(\bSigma+\bGamma\transpose\bGamma\big)^{-1}\bGamma b$. Then, the result comes from an inverse Fourier transform applied to
\[\hat\Delta(\omega) \propto e^{-(\omega-\tilde b)\transpose\pa{\bSigma+\bGamma\transpose\bGamma}\pa{\omega-\tilde b}}.\]
%and we get the result by applying the inverse Fourier transform.
\end{proof}
Therefore, the first-order scattering coefficients are given by
\[Sx_{\bSigma} \propto \{x_{\bSigma}\star (|\psi_{j,\theta}|\star\phi_J),x_{\bSigma}\star\phi_J\}_{\theta \in \Theta,j\leq J}.\]

A na\"ive inversion of the first-order scattering coefficients would be an inversion of the convolution with $|\psi_{j,\theta}|\star\phi_J$  which is unfortunately poorly conditioned for large values of $J$ since this filter is a low-pass one. However, solving the optimization \eqref{min} leads to a different solution due to the presence of the modulus during the gradient computation. For $J\leq 3$, it is possible to recover the original signal as shown in Figure~\ref{fig:reconstruction_blob}. Nevertheless, there is a  lack of spatial localization for $J\geq 4$ due to the averaging $\phi_J$, that we observe during our reconstruction experiment. This confirms our choice of $J=3$ for the remainder   of the paper.

\section{Numerical experiments}
\label{numeric}
We perform numerical experiments using first-order scattering output as the input to a CNN.
Our experiments aim to both validate that the first-order scattering can preserve the key signal information and highlight the practical importance of it. In particular, we find that we obtain performance close to some of the state-of-the-art systems \textit{while improving inference speed} and memory consumption during training
by light years..%, In particular, we find that in the case of object detection tasks 

\vfil
\subsection{ImageNet classification experiments}
\label{classif}
We first describe our image classification experiments on the challenging ImageNet dataset. Each of our experiments is performed using standard hyperparameters \textit{without a specific adaptation} to our hybrid architecture. Extensive architecture search for the first-order scattering input is not performed and we believe that these results can be improved with resources matching the architectures and hyperparameters developed for natural images.

ImageNet ILSVRC2012 is a challenging dataset for classification. It consists of $1$k classes, $1.2$M large colored images for training and $400$k images for testing. We demonstrate that our representation does not lose significant information for classification by obtaining a competitive performance on ImageNet. We follow a standard procedure training procedures \cite{zagoruyko2016wide,oyallon2017scaling,he2015deep}. Specifically, we applied standard data augmentation  and crop input images to a size of $224^2$. The first order scattering then further reduces this to a size of $28\times28$. We trained our CNNs by stochastic gradient descent (SGD) with the momentum of 0.9, weight decay of $10^{-4}$ and batch size of 256 images, trained for $90$ epochs. We reduce the learning rate by $0.1$ every 30 epochs. At test time, we rescale the images to $256^2$ and crop an image of size $224^2$.

To construct our scattering hybrid networks, we stay close to an original CNN reference model. In particular, we build our models out of the ResNets \cite{he2015deep} and WideResNets \cite{zagoruyko2016wide} models. A typical ResNet consists of an initial layer followed by $K=4$ so-called layer groups that in turn consist of $[n_1,\dots,n_K]$ residual blocks, where $n_i$ specifies the number of blocks in each layer group. Furthermore, the width in each blocks is a constant and equal to $[w_1,\dots,w_K]$. Similarly to \cite{oyallon2017scaling}, an initial convolutional layer is applied to increase the number of channels from $3\times(1+8J)=75$ to $w_1$. A stride of 2 is applied at the initial layer of the blocks $k\geq 2$, to reduce the spatial resolution. Each of the residual blocks  contains two convolutional operators, except when a stride of 2 is applied in order to replace the identity mapping, in which case there are three convolutional operators, as done in \cite{he2015deep}. In the following, we refer to ScatResNet-$L$ as the architecture with $L$ convolutional operators. As discussed we used $J=3$, as done in \cite{oyallon2017scaling}.

In our first experiment, we aim to directly compare to the results of \cite{oyallon2017scaling} which use the second-order scattering. Thus we use the same structure that applies $K=2$ layer groups on the scattering input instead of the typical 4. This architecture was called the ScatResNet-10 \cite{oyallon2017scaling}, and has $[2,2]$ layers of width $[256,512]$. The number of parameters is about $12$M in both cases. Notice that the number of parameters varies only since the initial number of input channels change. Table~\ref{table:ImageNet} reports similar accuracy for order 1 and order 2 scattering, which indicates that if enough data is available and there is a  small invariance to translation $J$, then for natural image classification, the order 2 does not provide significantly more information that can be exploited by a CNN.

Now we demonstrate that the scattering first-order transform continues to scale further when applying more sophisticated networks. Note that this would not have been possible with a second-order  scattering in a reasonable time. In our case, we avoid computing many convolutions. Scaling to these modern networks permits us to  apply the scattering in the subsequent section to common computer vision tasks that require a base network from ImageNet, and where the smaller input size leads to gains in speed and memory.

The models we construct are the ScatResNet-50, based on the ResNet50 architecture, and the WideScatResNet-50-2 based on the wide ResNet that expands the channel width and leads to competitive performance \cite{zagoruyko2016wide}. Since the scattering input starts at a much lower resolution, we bypass the first group of the typical ResNet, which normally consists of $K=4$ layer groups and reduce the number of groups to $K=3$. A typical ResNet50 has 16 residual blocks distributed among the 4 layer groups. We maintain the same number of total residual blocks and thereby layers as in the ResNet50, redistributing them among the three groups using $[5,8,3]$ blocks. As in their non-scattering analogue we apply bottleneck blocks \cite{he2015deep}. The width of the blocks for ScatResNet-50 and WideScatResNet-50-2 are $[128,256,512]$ and $[256,512,1024]$, which matches the widths of groups $2$ through $4$ of their non-scattering counterparts.   
\setlength{\tabcolsep}{4pt}
\begin{table}
\begin{center}
\caption{Accuracy on ImageNet. Note that scattering based models have input sizes of $28\times28\times75$ while the normal ImageNet models are trained on $224 \times 224 \times 3$.}
\label{table:ImageNet}
\ \\
\ \\
\begin{tabular}{>{}r>{\columncolor{white!10}}c>{\columncolor{white!10}}c>{\columncolor{white!10}}r}
\toprule
\cellcolor{white!20}\textbf{Architecture} & \cellcolor{white!20}\textbf{Top 1} & \cellcolor{white!20}\textbf{Top 5} & \cellcolor{white!20}\textbf{\#params}\\
\midrule
Order 1,2 + ScatResNet-10 \cite{oyallon2017scaling}&68.7&88.6 &12.8M\\
 Order 1 + ScatResNet-10  & 67.7 &87.7&11.4M \\
\midrule
Order 1 + ScatResNet-50   &74.5 & 92.0&27.8M\\
Order 1 + WideScatResNet-50-2 &76.2 &92.8&107.2M\\
ResNet-50 ({\tt pytorch})& 76.1 &92.9&25.6M\\
ResNet-101({\tt pytorch}) &77.4 & 93.6 & 45.4M\\
\midrule
VGG-16 \cite{simonyan2014very} & 68.5&88.7&138M\\
ResNet-50 \cite{he2015deep}& 75.3 &92.2&25.6M\\
ResNet-101 \cite{he2015deep}&76.4 & 92.9 & 45.4M\\
WideResNet50-2 \cite{zagoruyko2016wide} &77.9 &94.0 &68.9M \\
ResNet-152 \cite{he2015deep}&77.0&93.3&60.2M\\
\bottomrule
\end{tabular}
\end{center}
\vspace{-2em}
\end{table}

\noindent
Table~\ref{table:ImageNet} indicates that the performance obtained by those architectures can be competitive with their respective reference architectures for classification. We compare to the reference models trained using the same procedures as ours.\footnote{\tt \url{http://pytorch.org/docs/0.3.0/torchvision/models.html}} We additionally compare to published results of these models and several related ones. We evaluate the memory and speed of the ScatResNet-50 model and compare it to the reference models ResNet-50 and the next biggest ResNet model ResNet-101 in the first two rows of Table ~\ref{table:speedMem}. Our comparisons are done on a single GPU. As in \cite{gueguen2018faster,torfason2018towards}, we evaluate the inference time of the CNN from the encoding. For memory, we consider memory usage during training as we believe the scattering models are useful for  training with fewer resources. We find that our scattering model has favorable properties in memory and speed usage compared to its non-scattering analogues. In fact, as the next step, we demonstrate large improvements in accuracy, speed, and memory on detection tasks using the ScatResNet-50 network, which indicates that ScatResNet-50 features are also generic for detection.

\begin{table}
\begin{center}
\caption{Speed and memory consumption for ImageNet classification sizes (224x224) and detection scale 800px. We compare the inference speed of the learned CNN between the different models and for the detection models the inference speed of feature extraction. To evaluate memory we determine the maximum batch size possible for training on a single GPU. We use a single 11GB Ti 1080 GPU for all comparisons. }
\label{table:speedMem}
\ \\
\ \\
\begin{tabular}{rcc|cc}
\toprule
 &\multicolumn{2}{c}{\textbf{Classification Models}} & \multicolumn{2}{c}{\textbf{Detection Models}}\\\cmidrule{2-5} 
  & Speed   & Max im. & Speed & Max im.\\
\textbf{Architecture}&  (64 images) & ImageNet  & (4 images) & Coco\\\midrule
Order 1 + ScatResNet-50   &0.072 &175&0.073&9\\
ResNet-50 & 0.095 &120 &0.104 &7\\
ResNet-101 & 0.158 &\phantom{0}70&  0.182&2\\
\bottomrule
\end{tabular}
\end{center}
\vspace{-2em}
\end{table} 
\subsection{Detection experiments}
\label{detection}
Finally, we apply our hybrid architectures to detection. We base our experiments and hyperparameters on those indicated by the Faster-RCNN implementation of  \cite{jjfaster2rcnn} without any specific adaptation to the dataset. We consider both the VOC07 and COCO and  adopt  the ScatResNet-50 network as the basis of our model. We shared the output of the second layer across a region proposal network and a detection network, which are kept fixed.  The receptive field of each output neuron corresponds to $16^2$, which is similar to \cite{girshick2015fast,he2015deep,ren2017faster}. The next layers will be fine-tuned for the detection tasks and fed to classification and box-regression layers, as in \cite{girshick2015fast}, and a \emph{region proposal network} as done in \cite{ren2017faster}. Similarly to \cite{he2015deep,ren2017faster}, we fixed all the batch normalization~\cite{ioffe2015batch}  layers, including the running means and biases.
\subsubsection{Pascal VOC07}

\begin{table}
\begin{center}
\caption{Mean average precision on Pascal VOC7 dataset. First-order scattering permits outperforms the related models.}%, while obtaining improved speed and memory usage. \mynote{We use a single GPU (Titan X) to evaluate the inference speed of 16 images. To evaluate memory consumption at training we report the maximum batch size permitted by the model training}}
\label{table:Pascal}
\ \\
\ \\
\begin{tabular}{>{}r>{\columncolor{white!10}}c}
\toprule
\cellcolor{white!20}\textbf{Architecture} & \cellcolor{white!20}\textbf{mAP}  \\
\midrule

Faster-RCNN Order 1 + ScatResNet-50 (ours) &73.3\\
Faster-RCNN ResNet-50 (ours) & 70.5 \\
Faster-RCNN ResNet-101 (ours) & 72.5 \\
\midrule
Faster-RCNN VGG-16  \cite{jjfaster2rcnn}&70.2\\

\bottomrule
\end{tabular}
\end{center}
\vspace{-2em}
\end{table}
\setlength{\tabcolsep}{1.4pt}

Pascal VOC2007 \cite{everingham2010pascal} consists of $10$k images split equally for training (``train+val'') and testing, with about $25$k annotations. We chose  the same hyperparameters as used in \cite{jjfaster2rcnn}. We used  an initial learning rate of $10^{-3}$  that we dropped by $10$ in epoch 5 and we report the accuracy of the epoch 6 in Table~\ref{table:Pascal} on the test set. During training, the images are flipped and rescaled with a ratio between 0.5 and 2, such that the smaller size is 600px as~\cite{he2015deep,ren2017faster}.

\noindent The training procedures  used for detection often vary substantially. This includes batch size, weight decay for different parameters, and the number of training epochs among others. Due to this inconsistency, we train our own baseline models. We use the trained base networks for ResNet-50 and ResNet-101 provided as part of the {\tt torchvision} package in {\tt pytorch} \cite{paszke2017automatic} and train the detection models in exactly the same way as described above for ScatResNet50, ResNet-50, and ResNet-101. Table \ref{table:Pascal} reports a comparison of our ScatResNet model and the ResNet50 and ResNet101 model on this task. The results clearly show that our architecture and base network leads to a substantially better performance in terms of the mAP. On this particular dataset, perhaps due to its smaller size, we find the hybrid model can outperform even models with substantially stronger base networks \cite{oyallon2017scaling} i.e the performance of ScatResNet-50 is above that of the ResNet101 based model. In the second two rows of Table~\ref{table:speedMem}, we show the memory and speed of the different models. The inference speed of the base network feature extractor is shown and for memory, we show the maximum batch size that one can train with. The tradeoff in mAP vs. speed and mAP vs. memory consumption here clearly favors the scattering based models. We now consider a larger scale version of this task on the COCO dataset.  

\subsubsection{COCO}
\setlength{\tabcolsep}{4pt}
\begin{table}
\begin{center}
\caption{Mean average precision on COCO 2015 minival. Our method obtains competitive performance with respect to popular methods.}% We are able to train with larger batches using only half as many GPUs than other approaches.}
\ \\
\ \\
\label{table:COCO}

\begin{tabular}{>{}r>{\columncolor{white!10}}c}
\toprule
\cellcolor{white!20}\textbf{Architecture} & \cellcolor{white!20}\textbf{mAP}  \\
\midrule

Faster-RCNN Order 1 + ScatResNet-50  &32.2\\
Faster-RCNN ResNet-50 (ours) & 31.0 \\
Faster-RCNN ResNet-101 (ours) & 34.5\\
\midrule
Faster-RCNN VGG-16 \cite{jjfaster2rcnn} &29.2\\
Detectron \cite{he2017mask}&41.8\\
\bottomrule
\end{tabular}
\end{center}
\vspace{-2em}
\end{table}
\setlength{\tabcolsep}{1.4pt}
We likewise deploy the ScatResNet-50 on the COCO dataset \cite{lin2014microsoft}. This detection dataset is more difficult than than PASCAL VOC07. It has $120$k images, out of which we use 115k for training and 5k for validation (minival), with 80 different categories. We again follow the implementation of \cite{jjfaster2rcnn} and their training and evaluation protocol. Specifically, we train the Faster-RCNN networks for 6 epochs with an initial learning rate of $8\times10^{-3}$, multiplying by a factor $0.1$ at epoch 4. We use a minimal size of 800px, and similar scale augmentation w.r.t.\,Pascal VOC07. We use a batch size of 8  on 4 GPUs and train again all 3 models ScatResNet-50, ResNet-50, and ResNet-101. At test time, we restrict the maximum size to be 1200px as in \cite{jjfaster2rcnn} to permit an evaluation on a single GPU.

Table \ref{table:COCO} reports the mAP of our model compared to its non-hybrid counterparts. This score is computed via the standard averaged over IoU thresholds $[0.5,0.95]$. Our architecture accuracy falls between the one of a ResNet-50 and a ResNet-101.   Observing Table ~\ref{table:speedMem} the tradeoff in mAP vs. speed and mAP vs. memory consumption here still favors the scattering based models. The results indicate that scattering based models can be favorable even in sophisticated \textit{near}-state-of-the-art models. We encourage future work on combining scattering based models with the \textit{most}-state-of-the-art  architectures and pipelines.

\section{Conclusion}
We consider the problem of compressing an input image while retaining the information and structure necessary to allow a typical CNN to be applied. To the best of our knowledge, this problem has not been directly tackled with an effective solution. We motivate the use of the \textit{first-order scattering} as a candidate for performing the \emph{signal reduction}. We first refine several theoretical results regarding the stability with respect to translation of the first-order scattering. This motivates the use of G\'abor wavelets that capture many signal attributes. We then show both on an analytical model and  experimentally that reconstruction is possible. We perform experiments on challenging image classification and detection datasets ImageNet and COCO, showing that CNNs approaching the state-of-the-art performance can be built on top of the first-order scattering. This work opens the way to a research on transformations that build compressed input representations. Finally, we incite research on families of wavelets that could increase the  resolution reduction and on determining whether our result generalizes to other classes of signals.

\vspace{0.3cm}
{\footnotesize
\noindent\textbf{Acknowledgements}
E.\,Oyallon was supported by a GPU donation from NVIDIA and partially supported by a  grant from the DPEI 
of Inria (AAR 2017POD057) for the collaboration with CWI. S.\,Zagoruyko was supported by the DGA RAPID project DRAAF. The research presented was also supported by European CHIST-ERA project DELTA, French Ministry of
Higher Education and Research, Nord-Pas-de-Calais Regional Council,
Inria and Otto-von-Guericke-Universit\"at Magdeburg associated-team north-European project Allocate, and French National Research Agency projects ExTra-Learn (n.ANR-14-CE24-0010-01) and BoB (n.ANR-16-CE23-0003).}
\vfil

%
%\subsection{Citations}
%
%The list of references is headed ``References" and is not assigned a
%number
%in the decimal system of headings. The list should be set in small print
%and placed at the end of your contribution, in front of the appendix,
%if one exists.
%Please do not insert a pagebreak before the list of references if the
%page is not completely filled.
%An example is given at the
%end of this information sheet. For citations in the text please use
%square brackets and consecutive numbers: \cite{Alpher02},
%\cite{Alpher03}, \cite{Alpher04} \dots
%
%\section{Conclusions}
%
%The paper ends with a conclusion. 
%
%
%
%
%\clearpage%\mbox{}Page \thepage\ of the manuscript.
%\clearpage\mbox{}Page \thepage\ of the manuscript.
%\clearpage\mbox{}Page \thepage\ of the manuscript.
%\clearpage\mbox{}Page \thepage\ of the manuscript.
%\clearpage\mbox{}Page \thepage\ of the manuscript.
%\clearpage\mbox{}Page \thepage\ of the manuscript.
%\clearpage\mbox{}Page \thepage\ of the manuscript.
%This is the last page of the manuscript.
%\par\vfill\par
%Now we have reached the maximum size of the ECCV 2018 submission (excluding references).
%References should start immediately after the main text, but can continue on p.15 if needed.

%\mynote{consistency: order 1, order one, one order, first-order}

\bibliographystyle{splncs}
\bibliography{egbib}
\end{document}